\newtheorem{theorem}{Theorem}
\newtheorem{lemma}[theorem]{Lemma}
\theoremstyle{definition}
\title{Neural collapse with unconstrained features}
\author{Dustin~G.~Mixon, Hans~Parshall, Jianzong~Pi}
\author{Dustin~G.~Mixon\footnote{Department of Mathematics, The Ohio State University, Columbus, OH} \footnote{Translational Data Analytics Institute, The Ohio State University, Columbus, OH} \and Hans~Parshall\footnote{Department of Mathematics, Western Washington University, Bellingham, WA} \and Jianzong~Pi\footnote{Department of Electrical and Computer Engineering, The Ohio State University, Columbus, OH}}
\date{}
\begin{document}
\maketitle

\begin{abstract}
Neural collapse is an emergent phenomenon in deep learning that was recently discovered by Papyan, Han and Donoho.
We propose a simple \textit{unconstrained features model} in which neural collapse also emerges empirically.
By studying this model, we provide some explanation for the emergence of neural collapse in terms of the landscape of empirical risk.
\end{abstract}

\section{Introduction}

Consider the task of learning a function $\mathcal{X}\to[C]$, where $\mathcal{X}\subseteq\mathbb{R}^d$ represents a space of signals and $[C]:=\{1,\ldots,C\}$ represents a set of $C\in\mathbb{N}$ labels.
Given a labeled training set $\{(x_i,c_i)\}_{i\in\mathcal{I}}$ in $\mathcal{X}\times [C]$, one might seek both a feature map $h\colon \mathcal{X} \to\mathbb{R}^p$ and a linear classifier $z\mapsto \arg\max(Wz+b)$ with $W\in\mathbb{R}^{C\times p}$ and $b\in\mathbb{R}^C$ for which empirical risk is small:
\[
R_e(h,W,b)
:=\sum_{i\in\mathcal{I}} \mathcal{L}(Wh(x_i)+b,y_i).
\]
Here, $\mathcal{L}\colon\mathbb{R}^C\times \mathbb{R}^C\to\mathbb{R}^+$ denotes some loss function and $y_i\in\mathbb{R}^C$ denotes the one-hot vector representation $e_{c_i}$ of the label $c_i$.
To accomplish this, it is common to take $h$ to reside in some parameterized family $\mathcal{H}=\{h_\theta:\theta\in\Theta\}$, such as a family of neural networks, and then locally minimize empirical risk over $(\theta,W,b)$.
Such methods have revolutionized classification in various domains, notably image domains~\cite{KrizhevskySH:12}, but this level of performance is largely under-explained by existing theory.

Papyan, Han and Donoho~\cite{PapyanHD} recently observed several emergent phenomena in the \textit{terminal phase of training}, that is, when the above empirical risk is minimized even after the classifier $x\mapsto \arg\max(Wh(x)+b)$ interpolates the training set.
They focused on the balanced case in which there exists $N\in\mathbb{N}$ such that for each $c\in[C]$, it holds that
\[
|\{i\in\mathcal{I}:c_i=c\}|=N,
\]
i.e., $|\mathcal{I}|=CN$.
For several common families $\mathcal{H}$ of neural networks, locally minimizing empirical risk results in $(h,W,b)$ that satisfies several properties, which are collectively referred to as \textbf{neural collapse}:

\medskip

\noindent
\textbf{(NC1) Variability collapse.}
For each $c\in [C]$, there exists $\mu_c\in\mathbb{R}^p$ such that for every $i\in\mathcal{I}$ with $y_i=e_c$, it holds that $h(x_i)=\mu_c$.

\medskip

\noindent
\textbf{(NC2) Simplex equiangular tight frame structure.}
Put $\mu_G:=\frac{1}{C}\sum_{c=1}^C\mu_{c}$.
Then
\[
\|\mu_c-\mu_G\|_2
=\|\mu_{c'}-\mu_G\|_2
\]
for every $c,c'$.
Furthermore, let the $c$th column of $M\in\mathbb{R}^{p\times C}$ be
$\tilde\mu_c
:=\frac{\mu_c-\mu_G}{\|\mu_c-\mu_G\|_2}$.
Then
\[
M^\top M
=\frac{C}{C-1}I_C-\frac{1}{C-1}1_C1_C^\top.
\]

\medskip

\noindent
\textbf{(NC3) Self-duality.}
$\frac{W^\top}{\|W\|_F}=\frac{M}{\|M\|_F}$.

\medskip

\noindent
\textbf{(NC4) Equivalence to nearest class center.}
$\arg\max_c(Wz+b)_c=\arg\min_c\|z-\mu_c\|_2$.

\medskip

In general, an equiangular tight frame (ETF) is any tuple $\{v_i\}_{i=1}^n$ of unit vectors in $\mathbb{C}^d$ for which there exist $\alpha,\beta\geq0$ such that
\[
\sum_{i=1}^n v_iv_i^*=\alpha I_d,
\qquad
|\langle v_i,v_j\rangle|^2=\beta
\qquad
\forall i,j\in[n],~i\neq j.
\]
It is natural to identify each $v_iv_i^*$ with the corresponding point in complex projective space.
ETFs were introduced in~\cite{StrohmerH:03} as convenient maximizers of minimum pairwise distance in this space.
As optimal projective codes, ETFs find applications in multiple description coding~\cite{StrohmerH:03}, digital fingerprinting~\cite{MixonQKM:13}, compressed sensing~\cite{BandeiraFMW:13},
and quantum state tomography~\cite{RenesBSC:04}.
These applications have motivated a flurry of recent work to discover various infinite families of ETFs; see~\cite{FickusM:15} for a living survey.
The simplex ETF arises from one of the simplest constructions: take $\{v_i\}_{i=1}^n$ to be the $n=d+1$ vertices of an origin-centered regular simplex in $\mathbb{R}^d$.
We note that negating any vector in an ETF produces another ETF (as ETFs are fundamentally a projective objects), whereas property (NC2) is not invariant to such an operation.
The regular simplex also emerges as an optimal spherical code by virtue of achieving equality in Rankin's simplex bound~\cite{Rankin:55}.
In addition, $C$ vertices of a regular simplex in $\mathbb{R}^p$ induce Voronoi cells that partition $\mathbb{R}^p$ into $C$ isometric cones that are highly symmetric, and this geometry plays a fundamental role in (NC4).

The rich geometric structure of neural collapse has some advantages in the context of machine learning. 
In particular, if the feature map $h$ generalizes, then neural collapse helps the classifier $x\mapsto\arg\max(Wh(x)+b)$ to also generalize.
Indeed, consider a member $x$ of the test set corresponding to class $c$.
If the corresponding point $h(x)$ in the feature domain is a perturbation of $\mu_c$, then $h(x)$ is most likely to reside in the Voronoi cell containing $\mu_c$ if the class means form the vertices of a regular simplex.
Papyan, Han and Donoho~\cite{PapyanHD} provide some theoretical justification along these lines.

While the notion of neural collapse appears useful in both theory and practice, it remains unexplained why neural collapse emerges from empirical risk minimization.
This paper provides some explanation for this emergence.
In the next section, we propose a simple \textit{unconstrained features model} in which a strong notion of neural collapse empirically emerges.
For this model, Section~3 identifies an invariant subspace of the gradient descent dynamical system that encourages convergence to this strong notion of neural collapse.
We conclude in Section~4 with a brief discussion.

\section{Unconstrained features and strong neural collapse}

Empirically, neural collapse is a phenomenon that occurs in the terminal phase of training, in which the trained classifier interpolates the training set.
This behavior is only feasible when the feature maps in $\mathcal{H}$ restricted to the training set $\{x_i\}_{i\in\mathcal{I}}$ form a high-dimensional subset of $(\mathbb{R}^p)^{\mathcal{I}}$.
Indeed, the observations by Papyan, Han and Donoho~\cite{PapyanHD} were made for over-parameterized families of neural networks.
In this paper, we take this to an extreme of sorts by considering the following \textbf{unconstrained features model}: 
\[
\mathcal{H}
:=\{h\colon\mathcal{X}\to\mathbb{R}^p\},
\qquad
\mathcal{X}:=\{x_i:i\in\mathcal{I}\}\subseteq\mathbb{R}^d.
\]
Under the reasonable assumption that $x_i\neq x_j$ whenever $i\neq j$, then we may represent $\mathcal{H}$ by the matrix space $\mathbb{R}^{p\times CN}$; explicitly, we identify $\mathcal{I}$ with $[CN]$, and the $i$th column of $H\in\mathbb{R}^{p\times CN}$ is given by $h(x_i)$.
Recalling that $\{c_i\}_{i\in\mathcal{I}}$ consists of $N$ copies of each member of $[C]$, we may similarly represent $\{y_i\}_{i\in\mathcal{I}}$ as the columns of the matrix $I_C\otimes 1_N^\top$.
Note that feature maps in the unconstrained features model are only defined over the training set, while feature maps that are typically trained in practice are defined over all of $\mathbb{R}^d$.
Of course, such a choice of $\mathcal{H}$ has no hope of generalizing to a test set, but we will find that it facilitates the study of neural collapse.
For simplicity, we consider the loss defined by $\mathcal{L}(u,v):=\frac{1}{2}\|u-v\|_2^2$, in which case empirical risk reduces to
\[
R_e(H,W,b)
=\frac{1}{2}\|WH+b1_{CN}^\top-I_C\otimes1_N^\top \|_F^2,
\qquad
H\in\mathbb{R}^{p\times CN},~W\in\mathbb{R}^{C\times p},~b\in\mathbb{R}^C.
\]
Following common practice in deep learning~\cite{UFLDL:online}, we initialize gradient descent close to the origin.
As illustrated in Figure~\ref{fig.snc}, the following phenomena emerges from this local optimization, which we collectively refer to as \textbf{strong neural collapse}:
\begin{align}
\label{eq.snc1}
\tag{SNC1}
WW^\top
&=\sqrt{N}(I_C-\frac{1}{C}1_C1_C^\top),\\
\label{eq.snc2}
\tag{SNC2}
H
&=\frac{1}{\sqrt{N}}(W\otimes 1_N)^\top,\\
\label{eq.snc3}
\tag{SNC3}
b
&=\frac{1}{C}1_C.
\end{align}
The following lemma establishes that the points exhibiting strong neural collapse form a subset of the global minimizers of empirical risk.
In the next section, we show how the emergence of strong neural collapse is an artifact of the optimization landscape.

\begin{figure}
\begin{center}
\includegraphics[height=0.18\textwidth,trim={20 20 20 20},clip]{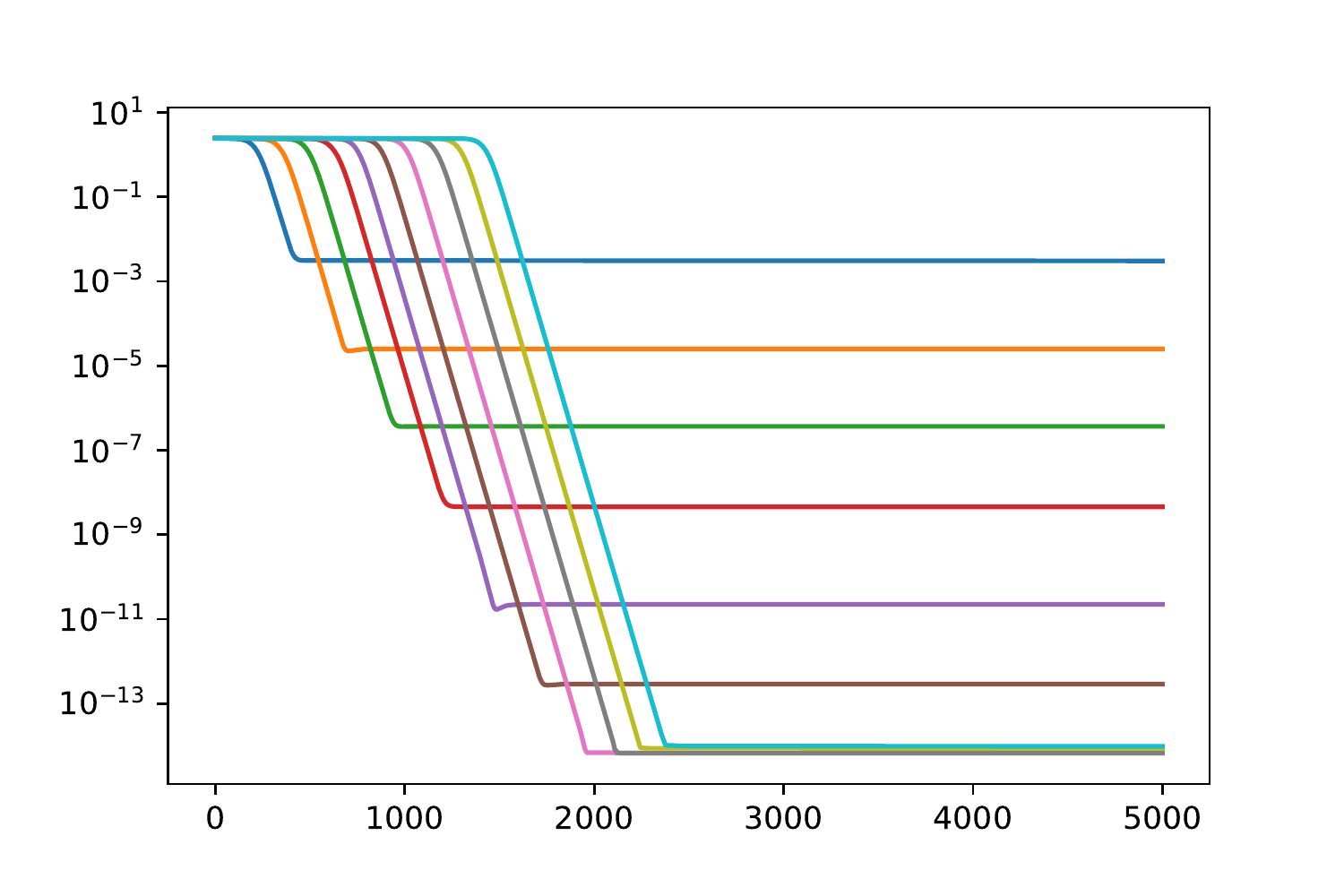}
\includegraphics[height=0.18\textwidth,trim={20 20 20 20},clip]{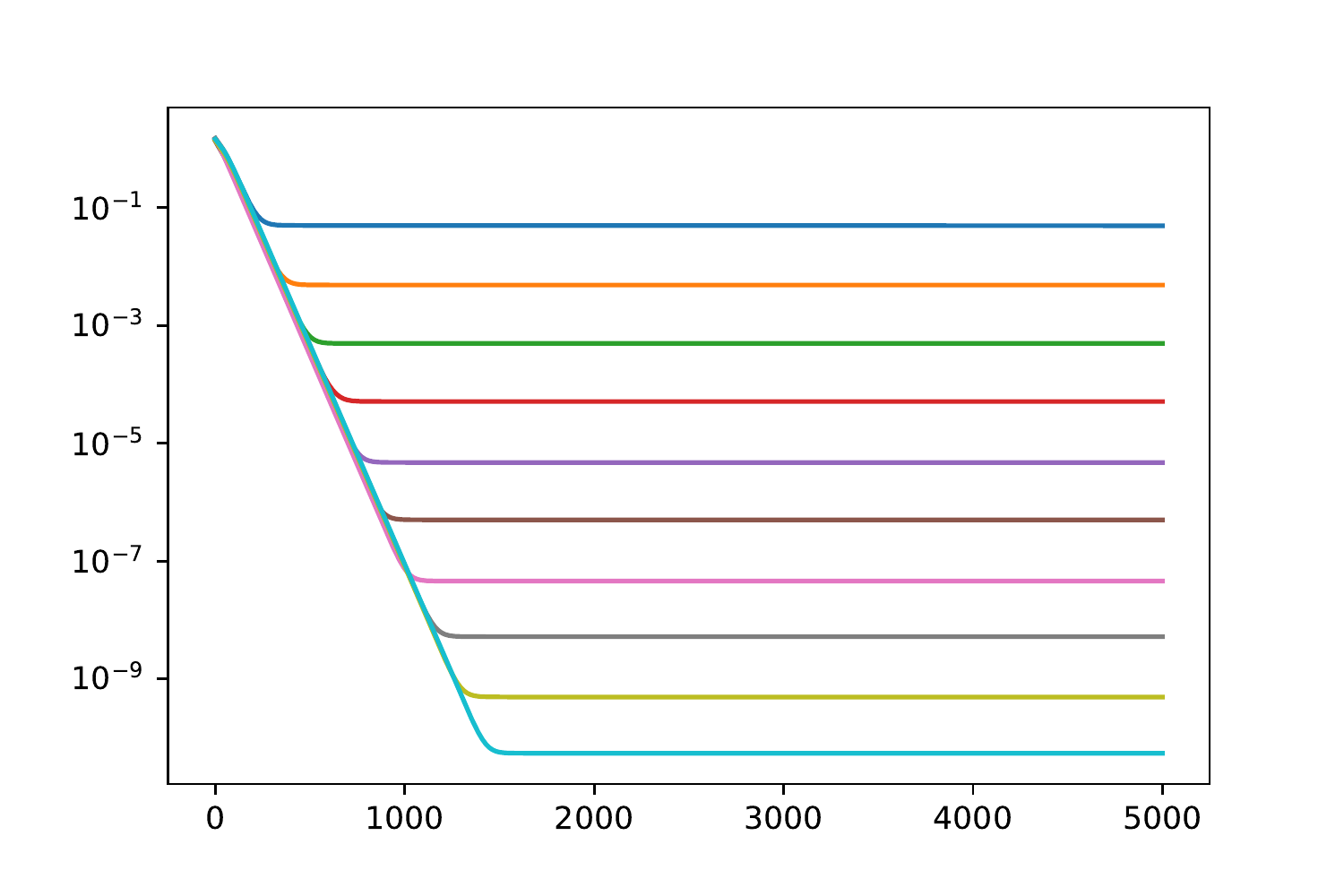}
\includegraphics[height=0.18\textwidth,trim={20 20 20 20},clip]{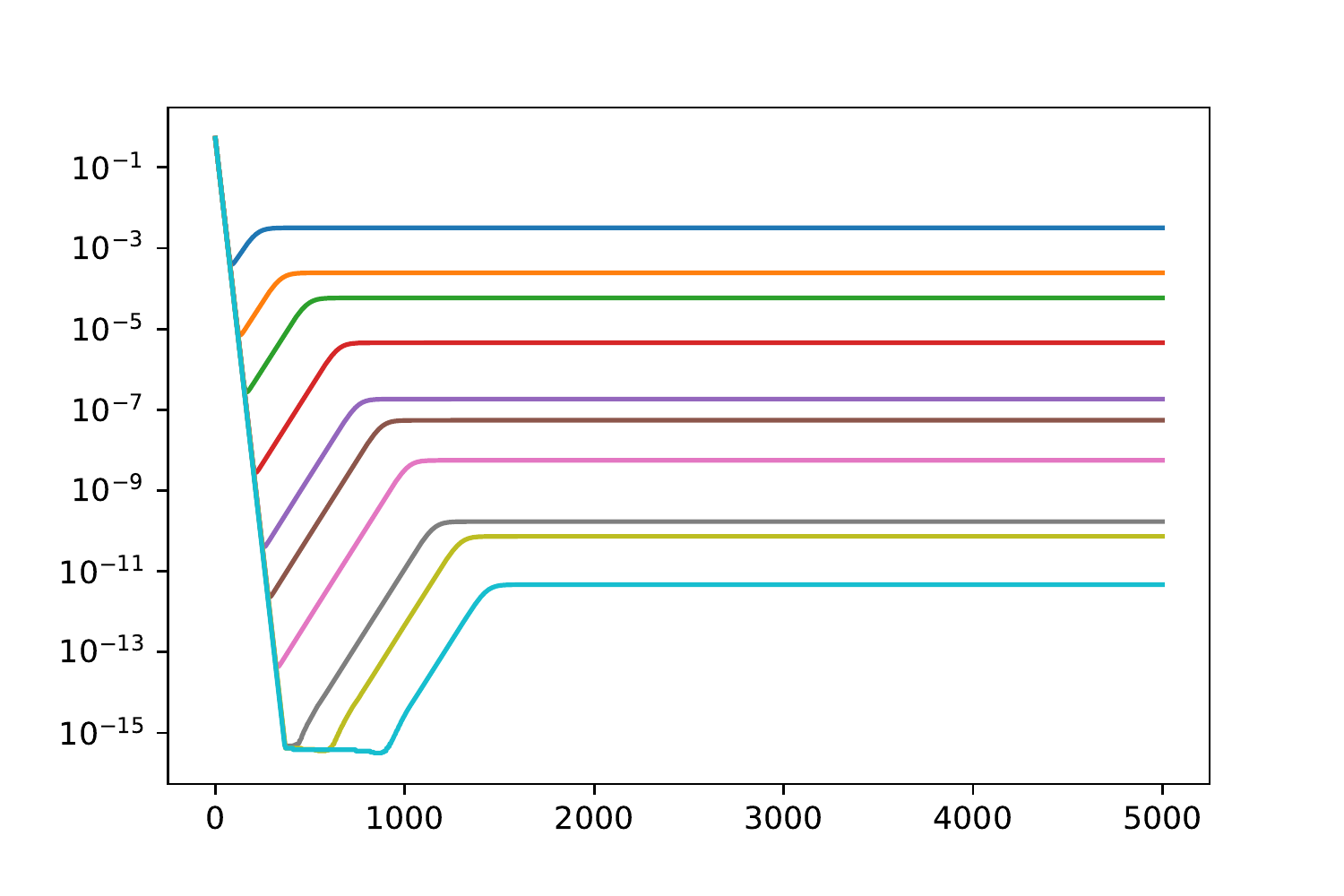}
\includegraphics[height=0.18\textwidth]{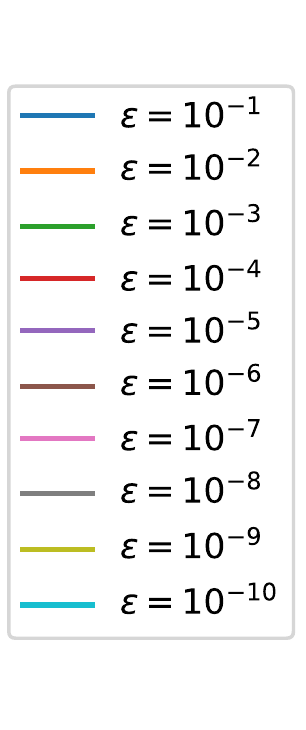}
\end{center}
\caption{\label{fig.snc}
The emergence of strong neural collapse.
Run gradient decent to minimize $R_e(H,W,b)$ for $C=N=3$ and $p=15$, initializing at a random choice of $H_0$ and $W_0$ with $\|H_0\|_F=\|W_0\|_F=\varepsilon$ and $b_0=0$.
At each iteration, quantify the error in \eqref{eq.snc1} by $\|WW^\top-\sqrt{N}(I_C-\frac{1}{C}1_C1_C^\top)\|_F$ (plotted on the left), the relative error in \eqref{eq.snc2} by $\|H-\frac{1}{\sqrt{N}}(W\otimes 1_N)^\top\|_F/\|H\|_F$ (plotted in the middle), and the error in \eqref{eq.snc3} by $\|b-\frac{1}{C}1_C\|_2$ (plotted on the right).
Apparently, the limit point of gradient descent approaches strong neural collapse as the initialization approaches the origin.
}
\end{figure}

\begin{lemma}\
\begin{itemize}
\item[(a)]
If $(H,W,b)$ exhibits strong neural collapse, then $R_e(H,W,b)=0$.
\item[(b)]
Strong neural collapse implies neural collapse.
\end{itemize}
\end{lemma}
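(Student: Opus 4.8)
The plan is to prove both parts by direct substitution. For part (a), I would verify that strong neural collapse forces $WH+b1_{CN}^\top = I_C\otimes 1_N^\top$, whence $R_e(H,W,b)=\tfrac12\|0\|_F^2=0$. Plugging (SNC2) into $WH$ and using $(W\otimes 1_N)^\top=W^\top\otimes 1_N^\top$ together with the mixed-product property of the Kronecker product gives $WH=\tfrac{1}{\sqrt N}(WW^\top)\otimes 1_N^\top$, which by (SNC1) equals $(I_C-\tfrac1C1_C1_C^\top)\otimes 1_N^\top$. Since $1_{CN}^\top=1_C^\top\otimes 1_N^\top$, (SNC3) gives $b1_{CN}^\top=\tfrac1C(1_C1_C^\top)\otimes 1_N^\top$, and adding the two expressions returns $I_C\otimes 1_N^\top$ exactly. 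The only thing requiring care here is the Kronecker bookkeeping.

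For part (b), I would first read the class means off of (SNC2). Writing $w_c\in\mathbb{R}^p$ for the transpose of the $c$th row of $W$ (equivalently, the $c$th column of $W^\top$), (SNC2) says that every column of $H$ indexed by class $c$ equals $\mu_c:=\tfrac{1}{\sqrt N}w_c$, which is precisely (NC1). The key preliminary observation is that (SNC1) forces $W^\top 1_C=0$: from $WW^\top 1_C=0$ we get $\|W^\top 1_C\|_2^2=1_C^\top WW^\top 1_C=0$. Hence $\mu_G=\tfrac{1}{C\sqrt N}\sum_c w_c=0$, so $\mu_c-\mu_G=\mu_c$, and the diagonal of (SNC1) gives $\|w_c\|_2^2=\sqrt N\,\tfrac{C-1}{C}$ for every $c$; this is the equal-norm half of (NC2).

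The remaining verifications are then short. Setting $\tilde\mu_c=w_c/\|w_c\|_2$, the off-diagonal entries $\langle w_c,w_{c'}\rangle=-\tfrac{\sqrt N}{C}$ of (SNC1) yield $(M^\top M)_{cc'}=-\tfrac1{C-1}$ for $c\neq c'$ and $1$ on the diagonal, i.e.\ $M^\top M=\tfrac{C}{C-1}I_C-\tfrac1{C-1}1_C1_C^\top$, completing (NC2). Since all the norms $\|w_c\|_2$ are equal, $M$ is a positive scalar multiple of $W^\top$, so dividing both by their Frobenius norms gives (NC3). Finally, expanding $\|z-\mu_c\|_2^2=\|z\|_2^2-\tfrac{2}{\sqrt N}\langle z,w_c\rangle+\tfrac{C-1}{C\sqrt N}$ and $(Wz+b)_c=\langle w_c,z\rangle+\tfrac1C$, the terms not depending on $c$ drop out, so both $\arg\min_c\|z-\mu_c\|_2$ and $\arg\max_c(Wz+b)_c$ agree with $\arg\max_c\langle w_c,z\rangle$, which is (NC4). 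I do not anticipate any genuine obstacle: the whole lemma is substitution, and the one place to be attentive is the Kronecker indexing in (a) and the extraction of the per-class block structure of $H$ in (b), which the identity $W^\top 1_C=0$ renders painless by placing $\mu_G$ at the origin.
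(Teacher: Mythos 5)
Your proposal is correct and follows essentially the same route as the paper's proof: part (a) by Kronecker substitution reducing everything to $(\tfrac{1}{\sqrt N}WW^\top+\tfrac1C 1_C1_C^\top-I_C)\otimes 1_N^\top=0$, and part (b) by setting $\mu_c=\tfrac{1}{\sqrt N}w_c$, deducing $W^\top 1_C=0$ (hence $\mu_G=0$) from $1_C^\top WW^\top 1_C=0$, and reading (NC2)--(NC4) off the entries of (SNC1). All the constants you compute match the paper's.
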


\begin{proof}
For (a), we have
\begin{align*}
WH+b1_{CN}^\top-I_C\otimes 1_N^\top
&=W(\frac{1}{\sqrt{N}}(W\otimes 1_N)^\top)+\frac{1}{C}1_C1_{CN}^\top-I_C\otimes 1_N^\top\\
&=(\frac{1}{\sqrt{N}} WW^\top + \frac{1}{C}1_C1_C^\top - I_C)\otimes 1_N^\top,
\end{align*}
which vanishes by \eqref{eq.snc1}.
Next, we consider (b).
Letting $w_c^\top$ denote the $c$th row of $W$, then (NC1) follows from \eqref{eq.snc2} by taking $\mu_c:=\frac{1}{\sqrt{N}}w_c$.
For (NC2), observe that \eqref{eq.snc1} implies $\|W^\top 1_C\|_2^2=1_C^\top W W^\top 1_C=0$, and so
\[
\mu_G
=\frac{1}{C}\sum_{c=1}^C\mu_c
=\frac{1}{C}\sum_{c=1}^C\frac{1}{\sqrt{N}}w_c
=\frac{1}{C\sqrt{N}}W^\top1_C
=0.
\]
Then for every $c\in[C]$, it holds that
\[
\|\mu_c-\mu_G\|_2^2
=\|\mu_c\|_2^2
=\|\frac{1}{\sqrt{N}}w_c\|_2^2
=\frac{1}{N}(WW^\top)_{cc}
=\frac{1}{\sqrt{N}}(1-\frac{1}{C}).
\]
Furthermore, $\tilde\mu_c=\frac{\mu_c}{\|\mu_c\|_2}=\frac{w_c}{\|w_c\|_2}$, and so
\[
(M^\top M)_{cc'}
=\langle \tilde\mu_c,\tilde\mu_{c'}\rangle
=\langle \frac{w_c}{\|w_c\|_2},\frac{w_{c'}}{\|w_{c'}\|_2}\rangle
=\frac{(WW^\top)_{cc'}}{\sqrt{N}(1-\frac{1}{C})}
=(\frac{C}{C-1}I_C-\frac{1}{C-1}1_C1_C^\top)_{cc'}.
\]
Since $\|w_c\|_2^2=\sqrt{N}(1-\frac{1}{C})$ is constant over $c\in [C]$, we also have $M\propto W^\top$, from which (NC3) follows.
It remains to verify (NC4).
For this, we first apply \eqref{eq.snc3} and the fact that $\mu_c=\frac{1}{\sqrt{N}}w_c$ to obtain
\[
\arg\max_c(Wz+b)_c
=\arg\max_c(Wz)_c
=\arg\max_c\langle z,w_c\rangle
=\arg\max_c\langle z,\mu_c\rangle.
\]
Finally, \eqref{eq.snc1} implies $\|\mu_c\|_2=\|\mu_{c'}\|_2$ for every $c,c'$, and so
\[
\arg\max_c\langle z,\mu_c\rangle
=\arg\min_c(\|z\|_2^2-2\langle z,\mu_c\rangle+\|\mu_c\|_2^2)
=\arg\min_c\|z-\mu_c\|_2^2.
\qedhere
\]
\end{proof}

\section{The effect of empirical risk minimization}

Write $Z:=(H,W,b)$ and consider the \textit{gradient flow} ordinary differential equation
\[
Z'(t)=-\nabla R_e(Z(t)).
\]
This serves as a model for gradient descent with small learning rate that has been used to analyze the training of neural networks~\cite{AroraCH:18,ChizatB:18,DuHL:18,DuZPS:18,SongMN:18}.
To analyze gradient flow, we start by computing the gradient:
\[
\nabla_H R_e=W^\top A,
\quad
\nabla_W R_e=AH^\top,
\quad
\nabla_b R_e=A1_{CN},
\quad
A:=WH+b1_{CN}^\top-I_C\otimes 1_N^\top.
\]
Notably, the resulting gradient flow differential equation is nonlinear.
In order to analyze the initial behavior of our trajectory, we consider a modification in which $A$ is replaced by $\tilde{A}:=b1_{CN}^\top-I_C\otimes 1_N^\top$.
By partially decoupling $(H,W)$ and $b$, this modification facilitates analysis, while serving as an good approximation in the regime where $(H,W)$ is small.

\begin{theorem}
\label{thm.initial growth}
The solution to the ordinary differential equation
\begin{align*}
H'(t)
&=-W(t)^\top \tilde{A}(t),
\qquad
W'(t)
=-\tilde{A}(t)H(t)^\top,\\
b'(t)
&=-\tilde{A}(t)1_{CN},
\qquad
\tilde{A}(t)
:=b(t)1_{CN}^\top-I_C\otimes 1_N^\top
\end{align*}
with initial condition $H(0)=H_0$, $W(0)=W_0$, $b(0)=0$ satisfies
\[
\|(H(t),W(t))-e^{\sqrt{N}t}\cdot\Pi_T(H_0,W_0)\|_E\leq e^{1/(C\sqrt{N})}\cdot\|\Pi_{T^\perp}(H_0,W_0)\|_E,
\quad
b(t)=(\frac{1-e^{-CNt}}{C})1_C
\]
for all $t\geq0$, where $\|(H,W)\|_E^2:=\|H\|_F^2+\|W\|_F^2$ and $\Pi_T$ denotes orthogonal projection onto the subspace
\[
T:=\{(H,W):H=\frac{1}{\sqrt{N}}(W\otimes1_N)^\top,~1_C^\top W=0\}.
\]
\end{theorem}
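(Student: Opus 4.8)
The plan is to notice that, once $b(t)$ is pinned down, the system for $(H,W)$ is \emph{linear} with a mild time dependence, and then to diagonalize it and read off the exponential rates.

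First I would solve for $b$. Since $1_{CN}^\top 1_{CN}=CN$ and $(I_C\otimes 1_N^\top)1_{CN}=N1_C$, the equation $b'=-\tilde A 1_{CN}$ collapses to the scalar linear ODE $b'=-CN\,b+N1_C$, which with $b(0)=0$ integrates to $b(t)=\frac{1-e^{-CNt}}{C}1_C$ as claimed. Substituting back and using $1_C1_{CN}^\top=(1_C1_C^\top)\otimes 1_N^\top$ yields $\tilde A(t)=B(t)\otimes 1_N^\top$ with
\[
B(t)=\tfrac{1-e^{-CNt}}{C}\,1_C1_C^\top-I_C=-e^{-CNt}P-(I_C-P),\qquad P:=\tfrac1C 1_C1_C^\top .
\]
The last identity is the crucial point: $B(t)$ is simultaneously diagonalized for all $t$ by the orthogonal splitting $\mathbb{R}^C=\mathrm{span}(1_C)\oplus 1_C^\perp$, with eigenvalue $-e^{-CNt}$ on $\mathrm{span}(1_C)$ and $-1$ on $1_C^\perp$.

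Next I would reduce the $(H,W)$ system to scalar equations. Because $\tilde A(t)=B(t)\otimes 1_N^\top$ only couples to the class means of $H$, I would split $H=H_Q+H_\perp$ with $H_Q:=\tfrac1N H(I_C\otimes 1_N1_N^\top)$, introduce the class-mean matrix $\mu:=\tfrac1{\sqrt N}(I_C\otimes 1_N^\top)H^\top\in\mathbb{R}^{C\times p}$ (an isometric proxy for $H_Q$), and observe that $H'=-W^\top\tilde A(t)$ is always class-constant, so $H_\perp$ is decoupled and frozen: $H_\perp(t)\equiv H_\perp(0)$. A short Kronecker computation then gives $\mu'=-\sqrt N\,B(t)\,W$ and $W'=-\sqrt N\,B(t)\,\mu$; setting $S:=\mu+W$ and $D:=\mu-W$ decouples this into $S'=-\sqrt N B(t)S$ and $D'=\sqrt N B(t)D$, and applying the $P$ / $(I_C-P)$ splitting to the rows of $S$ and $D$ produces four independent scalar-coefficient linear ODEs with solutions
\[
S_\perp(t)=e^{\sqrt Nt}S_\perp(0),\quad
S_P(t)=e^{\frac{1-e^{-CNt}}{C\sqrt N}}S_P(0),\quad
D_\perp(t)=e^{-\sqrt Nt}D_\perp(0),\quad
D_P(t)=e^{-\frac{1-e^{-CNt}}{C\sqrt N}}D_P(0),
\]
where the inequality $\tfrac{1-e^{-CNt}}{C\sqrt N}\le\tfrac1{C\sqrt N}$ is precisely the source of the constant $e^{1/(C\sqrt N)}$.

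Finally I would identify $T$ with one of these blocks and assemble. Unwinding the definitions, a state lies in $T$ exactly when $H_\perp=0$, $\mu=W$, and the rows of $W$ sum to zero --- that is, when every block vanishes except $S_\perp$. Since the map $(H,W)\mapsto(H_\perp;\tfrac1{\sqrt2}S_P,\tfrac1{\sqrt2}S_\perp,\tfrac1{\sqrt2}D_P,\tfrac1{\sqrt2}D_\perp)$ is an isometry with mutually orthogonal factors, $T$ is precisely the span of the $S_\perp$ coordinate and $T^\perp$ is precisely the orthogonal sum of the $H_\perp$-, $S_P$-, $D_P$- and $D_\perp$-blocks. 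Hence the $\Pi_T$-component of $(H(t),W(t))$ is $e^{\sqrt Nt}S_\perp(0)$ under the isometry, i.e.\ equals $e^{\sqrt Nt}\Pi_T(H_0,W_0)$ exactly, and the residual $(H(t),W(t))-e^{\sqrt Nt}\Pi_T(H_0,W_0)$ is carried by the other four blocks, each of which has multiplier at most $e^{1/(C\sqrt N)}$ for every $t\ge 0$ (the $D_P$-, $D_\perp$- and $H_\perp$-blocks are in fact nonexpansive). Summing squared norms over these blocks and taking square roots then gives the stated inequality. The only real friction I anticipate is the Kronecker/transpose bookkeeping: verifying the $(\mu,W)$ system, and checking that under the chosen isometry $T$ maps onto exactly one eigenblock while $T^\perp$ maps onto exactly the complementary blocks with matching norms, so that the per-block bounds combine cleanly.
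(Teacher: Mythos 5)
Your argument is correct and is essentially the paper's own proof in different clothing: your five blocks $H_\perp$, $S_P$, $S_\perp$, $D_P$, $D_\perp$ are precisely the paper's simultaneous eigenspaces $E_3$, $E_2^+$, $E_1^+$, $E_2^-$, $E_1^-$ of the time-dependent linear operator, with the same multipliers $1$, $e^{(1-e^{-CNt})/(C\sqrt{N})}$, $e^{\sqrt{N}t}$, $e^{-(1-e^{-CNt})/(C\sqrt{N})}$, $e^{-\sqrt{N}t}$, and the final Pythagorean assembly is identical. The only difference is presentational: you derive the decomposition constructively via the change of variables $(\mu,W)\mapsto(S,D)$ and the $P$ versus $I_C-P$ splitting, whereas the paper guesses the eigenspaces and verifies them together with a dimension count.
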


\begin{proof}
We start by solving for $b(\cdot)$, which is governed by the differential equation
\[
b'(t)
=-\tilde{A}(t)1_{CN}
=(I_C\otimes 1_N^\top-b(t)1_{CN}^\top)1_{CN}
=N(1_C-Cb(t)).
\]
Since $b(0)=0$, we may write $b(t)=\beta(t)1_C$, in which case $\beta'(t)=N(1-C\beta(t))$, and so $\beta(t)=\frac{1-e^{-CNt}}{C}$, as claimed.
Next, writing $U=(H,W)$, then the unsolved portion of our system is given by
\[
U'(t)=L_t(U(t)),
\quad
L_t(H,W):=(W^\top M_t, M_tH^\top),
\quad
M_t:=(I_C\otimes 1_N^\top)-\beta(t)1_C1_{CN}^\top.
\]
First, we observe that each $L_t$ is self-adjoint:
\begin{align*}
\langle L_t(H,W),(\tilde{H},\tilde{W})\rangle_E
&=\langle (W^\top M_t, M_tH^\top),(\tilde{H},\tilde{W})\rangle_E\\
&=\operatorname{tr}((W^\top M_t)^\top\tilde{H})+\operatorname{tr}((M_tH^\top)^\top\tilde{W})\\
&=\operatorname{tr}(M_t^\top W \tilde{H})+\operatorname{tr}(HM_t^\top\tilde{W})\\
&=\operatorname{tr}(\tilde{H}M_t^\top W)+\operatorname{tr}(M_t^\top\tilde{W}H)\\
&=\operatorname{tr}((M_t\tilde{H}^\top)^\top W)+\operatorname{tr}((\tilde{W}^\top M_t)^\top H)\\
&=\langle (\tilde{W}^\top M_t,M_t\tilde{H}^\top),(H,W)\rangle_E
=\langle (H,W),L_t(\tilde{H},\tilde{W})\rangle_E.
\end{align*}
Next, we claim that $\{L_t\}_{t\geq0}$ are simultaneously diagonalizable over five eigenspaces:
\begin{align*}
E_1^\epsilon
&:=\{(H,W):H=\epsilon\cdot\frac{1}{\sqrt{N}}(W\otimes1_N)^\top,~1_C^\top W=0\},\\
E_2^\epsilon
&:=\{(H,W):H=\epsilon\cdot z1_{CN}^\top,~W=\sqrt{N}1_Cz^\top,~z\in\mathbb{R}^p\},\\
E_3
&:=\{(H,W):(I_C\otimes 1_N^\top)H^\top=0,~W=0\},
\end{align*}
where $\epsilon\in\{\pm\}$.
To see this, first note that
\[
\operatorname{dim}E_1^\epsilon
=p(C-1),
\qquad
\operatorname{dim}E_2^\epsilon
=p,
\qquad
\operatorname{dim}E_3
=pC(N-1).
\]
Since these dimensions sum to $pC(N+1)=\operatorname{dim}(\mathbb{R}^{p\times CN}\oplus\mathbb{R}^{C\times p})$, it suffices to show that each nonzero member of each claimed eigenspace is an eigenvector, and that the claimed eigenspaces have distinct eigenvalues.
First, suppose $(H,W)\in E_1^\epsilon$.
Then
\begin{align*}
W^\top M_t
&=W^\top((I_C\otimes 1_N^\top)-\beta(t)1_C1_{CN}^\top)
=W^\top(I_C\otimes 1_N^\top)
=W^\top\otimes 1_N^\top
=\epsilon\sqrt{N}\cdot H,\\
M_tH^\top
&=((I_C\otimes 1_N^\top)-\beta(t)1_C1_{CN}^\top)(\epsilon\cdot\frac{1}{\sqrt{N}}(W\otimes1_N)^\top)^\top\\
&=\epsilon\cdot\frac{1}{\sqrt{N}}\cdot((I_C\otimes 1_N^\top)-\beta(t)1_C1_C^\top\otimes 1_N^\top)(W\otimes1_N)=\epsilon\sqrt{N}\cdot W,
\end{align*}
i.e., $(H,W)$ is an eigenvector of $L_t$ with eigenvalue $\epsilon\sqrt{N}$.
Next, suppose $(H,W)\in E_2^\epsilon$.
Then
\begin{align*}
W^\top M_t
&=(\sqrt{N}1_Cz^\top)^\top((I_C\otimes 1_N^\top)-\beta(t)1_C1_{CN}^\top)\\
&=\sqrt{N}z1_C^\top((I_C\otimes 1_N^\top)-\beta(t)1_C1_{CN}^\top)\\
&=\sqrt{N}z(1_C^\top\otimes 1_N^\top-C\beta(t)1_{CN}^\top)
=\sqrt{N}(1-C\beta(t))\cdot z1_{CN}^\top
=\epsilon\sqrt{N}(1-C\beta(t))\cdot H,\\
M_tH^\top
&=((I_C\otimes 1_N^\top)-\beta(t)1_C1_{CN}^\top)(\epsilon\cdot z1_{CN}^\top)^\top\\
&=\epsilon\cdot((I_C\otimes 1_N^\top)-\beta(t)1_C1_{CN}^\top)1_{CN}z^\top\\
&=\epsilon\cdot((I_C\otimes 1_N^\top)(1_C\otimes1_N)-\beta(t)1_C1_{CN}^\top1_{CN})z^\top\\
&=\epsilon\cdot(N1_C-CN\beta(t)1_C)z^\top
=\epsilon N(1-C\beta(t))\cdot 1_Cz^\top
=\epsilon\sqrt{N}(1-C\beta(t))\cdot W,
\end{align*}
i.e., $(H,W)$ is an eigenvector of $L_t$ with eigenvalue $\epsilon\sqrt{N}(1-C\beta(t))$.
Finally, suppose $(H,W)\in E_3$.
Then $W^\top M_t=0$ and
\[
M_tH^\top
=((I_C\otimes 1_N^\top)-\beta(t)1_C1_{CN}^\top)H^\top
=-\beta(t)1_C1_{CN}^\top H^\top
=-\beta(t) 1_C1_C^\top(I_C\otimes 1_N^\top)H^\top
=0,
\]
i.e., $(H,W)$ is an eigenvector of $L_t$ with eigenvalue $0$.
Overall, letting $\Pi_i^\epsilon$ denote orthogonal projection onto $E_i^\epsilon$, we have the spectral decomposition
\[
L_t
=\sqrt{N}\Big(\Pi_1^+-\Pi_1^-+(1-C\beta(t))\Pi_2^+-(1-C\beta(t))\Pi_2^-\Big).
\]
Finally, we solve the differential equation $U'(t)=L_t(U(t))$ by finding the orthogonal projection of $U(t)$ onto each eigenspace of $L_t$.
First, $\Pi_1^\epsilon U'(t)=\epsilon\sqrt{N}\Pi_1^\epsilon U(t)$, and so
\[
\Pi_1^\epsilon U(t)
=e^{\epsilon\sqrt{N}t} \Pi_1^\epsilon U(0).
\]
Next, $\Pi_2^\epsilon U'(t)=\epsilon\sqrt{N}(1-C\beta(t))\Pi_2^\epsilon U(t)$, and so
\[
\Pi_2^\epsilon U(t)
=f_\epsilon(t) \Pi_2^\epsilon U(0),
\qquad
f_\epsilon(t)
:=\operatorname{exp}(\frac{\epsilon}{C\sqrt{N}}(1-e^{-CNt})).
\]
An application of the Pythagorean theorem then gives
\begin{align*}
\|U(t)-e^{\sqrt{N}t}\Pi_1^+ U(0)\|_E^2
&=\|e^{-\sqrt{N}t}\Pi_1^-U(0)+f_+(t)\Pi_2^+U(0)+f_-(t)\Pi_2^-U(0)\|_E^2\\
&=e^{-2\sqrt{N}t}\|\Pi_1^-U(0)\|_E^2+f_+(t)^2\|\Pi_2^+U(0)\|_E^2+f_-(t)^2\|\Pi_2^-U(0)\|_E^2\\
&\leq \|\Pi_1^-U(0)\|_E^2+e^{2/(C\sqrt{N})}\|\Pi_2^+U(0)\|_E^2+\|\Pi_2^-U(0)\|_E^2\\
&\leq e^{2/(C\sqrt{N})}\|(I-\Pi_1^+)U(0)\|_E^2.
\end{align*}
The result then follows by observing that $T=E_1^+$.
\end{proof}

Theorem~\ref{thm.initial growth} indicates that gradient flow initially magnifies the portion of $(H_0,W_0)$ that resides in $T$ while sending $b$ to $\frac{1}{C}1_C$ along $\operatorname{span}\{1_C\}$.
As a consequence, the initial trajectory of $(H,W,b)$ approximately travels along the subspace
\[
S:=\Big\{(H,W,b):H=\frac{1}{\sqrt{N}}(W\otimes 1_N)^\top,~1_C^\top W=0,~b\in\operatorname{span}\{1_C\}\Big\}.
\]
In fact, Figure~\ref{fig.invariant subspace} illustrates that (empirically) the \textit{full} trajectory approximately travels along this subspace.
As the following result demonstrates, $S$ is an invariant subspace of our differential equation that encourages convergence to points that exhibit strong neural collapse.

\begin{figure}
\begin{center}
\includegraphics[height=0.3\textwidth,trim={20 20 20 20},clip]{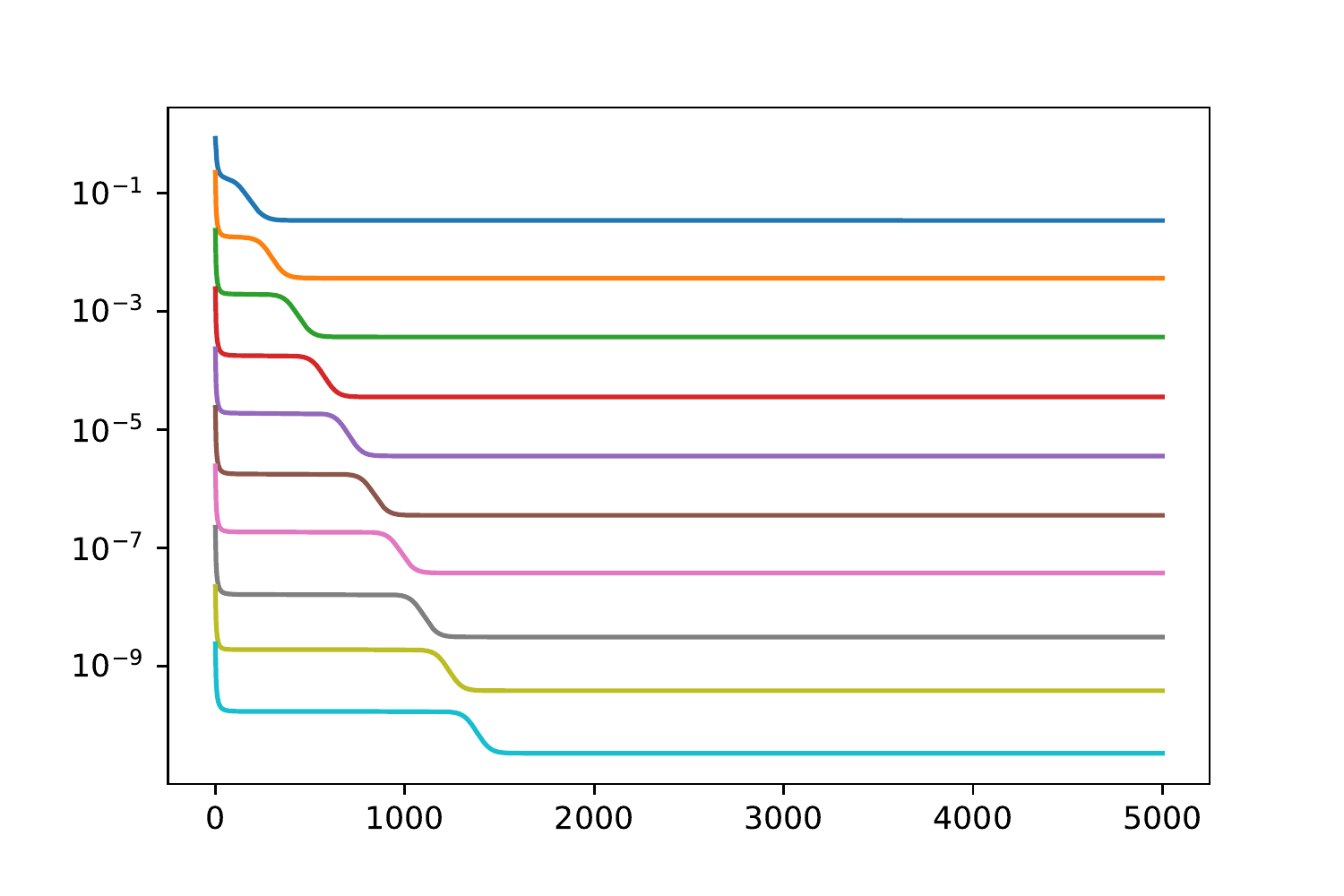}
\quad
\includegraphics[height=0.3\textwidth]{legend.pdf}
\end{center}
\caption{\label{fig.invariant subspace}
Gradient descent maintains small distance from the invariant subspace $S$.
Run gradient decent to minimize $R_e(H,W,b)$ for $C=N=3$ and $p=15$, initializing at a random choice of $H_0$ and $W_0$ with $\|H_0\|_F=\|W_0\|_F=\varepsilon$ and $b_0=0$.
At each iteration, quantify the relative distance from $S$ by $\|Z-\Pi_SZ\|_E/\|Z\|_E$, where $\|(H,W,b)\|_E^2:=\|H\|_F^2+\|W\|_F^2+\|b\|_2^2$.
}
\end{figure}

\begin{theorem}
Select $Z_0:=(H_0,W_0,b_0)\in S$ and consider $Z(\cdot)$ such that
\[
Z'(t)=-\nabla R_e(Z(t)),
\qquad
Z(0)=Z_0.
\]
\begin{itemize}
\item[(a)]
$Z(t)\in S$ for every $t\geq0$.
\item[(b)]
Select $\alpha\in\mathbb{R}$ such that $b_0=\frac{1-\alpha}{C}1_C$.
Then $(H(\cdot),W(\cdot),b(\cdot))=Z(\cdot)$ is given by
\begin{align*}
W'(t)
&=\sqrt{N}W(t)-W(t)W(t)^\top W(t),
\qquad
W(0)=W_0,\\
H(t)
&=\frac{1}{\sqrt{N}}(W(t)\otimes 1_N)^\top,
\qquad
b(t)
=\frac{1-\alpha e^{-CNt}}{C}1_C.
\end{align*}
\item[(c)]
Let $\Pi$ denote orthogonal projection onto $\operatorname{ker}W_0^\top$.
Then
\[
\lim_{t\to\infty}W(t)W(t)^\top=\sqrt{N}(I-\Pi).
\]
In particular, if $\operatorname{rank}W_0\geq C-1$, then $\lim_{t\to\infty}Z(t)$ exhibits strong neural collapse.
\end{itemize}
\end{theorem}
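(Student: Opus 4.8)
The plan is to establish (a) and (b) together by exhibiting an explicit solution of the gradient flow that stays in $S$ and then invoking uniqueness, and to deduce (c) from the resulting matrix Riccati equation for $WW^\top$, with a separate structural argument needed to upgrade convergence of $WW^\top$ to convergence of $W$ itself. I begin by evaluating $\nabla R_e$ at a point $(H,W,b)\in S$: writing $b=\beta 1_C$ and using $H=\frac{1}{\sqrt N}(W\otimes 1_N)^\top$, the Kronecker identity $(X\otimes Y)(X'\otimes Y')=(XX')\otimes(YY')$, and $1_{CN}^\top=1_C^\top\otimes 1_N^\top$, one computes $A=WH+b1_{CN}^\top-I_C\otimes 1_N^\top=B\otimes 1_N^\top$ with $B:=\frac{1}{\sqrt N}WW^\top+\beta 1_C1_C^\top-I_C$, hence $\nabla_H R_e=(W^\top B)\otimes 1_N^\top$, $\nabla_W R_e=\sqrt N\,BW$, and $\nabla_b R_e=N(B1_C)$. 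Since $1_C^\top W=0$ on $S$, we have $B1_C=(C\beta-1)1_C$ and $BW=\frac{1}{\sqrt N}WW^\top W-W$, so that $-\nabla_W R_e=\sqrt N W-WW^\top W$ and $-\nabla_b R_e=N(1-C\beta)1_C$.

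For (a) and (b), I would let $W(\cdot)$ solve $W'=\sqrt N W-WW^\top W$ with $W(0)=W_0$ — this has a solution for all $t\geq 0$ because $\frac{d}{dt}\operatorname{tr}(WW^\top)=2\sqrt N\operatorname{tr}(WW^\top)-2\operatorname{tr}((WW^\top)^2)$ keeps $\operatorname{tr}(WW^\top)$ bounded — and then set $H(t):=\frac{1}{\sqrt N}(W(t)\otimes 1_N)^\top$ and $b(t):=\frac{1-\alpha e^{-CNt}}{C}1_C$. Since $v(t):=W(t)^\top 1_C$ satisfies the linear equation $v'=(\sqrt N I_p-W^\top W)v$ with $v(0)=0$, we get $1_C^\top W(t)\equiv 0$, so $(H(t),W(t),b(t))\in S$ for all $t$. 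It then remains to check that this triple solves $Z'=-\nabla R_e(Z)$ coordinatewise: the $W$-equation holds by construction (and matches $-\nabla_W R_e$ by the computation above), the $b$-equation is a one-line check against the formula for $b$, and the $H$-equation reduces, using $W^\top 1_C=0$ and symmetry of $B$, to the $W$-equation. Since $\nabla R_e$ is polynomial and hence locally Lipschitz, uniqueness of solutions to the initial value problem forces $Z(\cdot)$ to be exactly this triple; this proves (a) and (b) simultaneously.

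For the first claim of (c), put $P(t):=W(t)W(t)^\top$; differentiating the $W$-equation gives the Riccati equation $P'=2\sqrt N P-2P^2$. As the right-hand side is a polynomial in $P$, the flow preserves the spectral decomposition of $P(0)$: writing $P(0)=\sum_j\lambda_j(0)u_ju_j^\top$ we get $P(t)=\sum_j\lambda_j(t)u_ju_j^\top$ with $\lambda_j'=2\lambda_j(\sqrt N-\lambda_j)$. Each $\lambda_j(0)\geq 0$, and the scalar logistic equation sends positive values to $\sqrt N$ and fixes $0$, so $P(t)\to\sqrt N\,\Pi_{\operatorname{range}P(0)}$; since $\operatorname{range}P(0)=\operatorname{range}W_0=(\ker W_0^\top)^\perp$, this is $\sqrt N(I-\Pi)$. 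If in addition $\operatorname{rank}W_0\geq C-1$, then $1_C^\top W_0=0$ forces $\operatorname{rank}W_0=C-1$, so $\ker W_0^\top=\operatorname{span}\{1_C\}$, $\Pi=\frac1C 1_C1_C^\top$, and $P(t)\to\sqrt N(I-\frac1C 1_C1_C^\top)$, which is \eqref{eq.snc1} in the limit; combined with $b(t)\to\frac1C 1_C$ (i.e. \eqref{eq.snc3}) and the fact that \eqref{eq.snc2} holds identically on $S$, strong neural collapse of $\lim_{t\to\infty}Z(t)$ will follow once $W(t)$ is shown to converge.

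The main obstacle is precisely this last step: controlling $WW^\top$ does not by itself control $W$. I would first freeze the row space: from $(W^\top)'=W^\top(\sqrt N I_C-WW^\top)$ we get $W^\top(t)=W_0^\top\Phi(t)$ for an invertible fundamental matrix $\Phi$, so $\operatorname{range}(W(t)^\top)=\operatorname{range}(W_0^\top)$ for all $t$, whence $\operatorname{rank}W(t)\equiv C-1$ and (since $\operatorname{range}W(t)\subseteq 1_C^\perp$) $\operatorname{range}W(t)=1_C^\perp$ for all $t$. Fixing orthonormal bases $U\in\mathbb{R}^{C\times(C-1)}$ of $1_C^\perp$ and $R\in\mathbb{R}^{p\times(C-1)}$ of $\operatorname{range}(W_0^\top)$, we may write $W(t)=U\hat W(t)R^\top$ with $\hat W(t)$ invertible and $\hat W'=\sqrt N\hat W-\hat W\hat W^\top\hat W$. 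Taking the polar decomposition $\hat W=\hat O\hat S$ (smooth since $\hat W$ stays invertible), with $\hat O$ orthogonal and $\hat S=(\hat W^\top\hat W)^{1/2}\succ 0$, and left-multiplying the $\hat W$-equation by $\hat O^\top$ gives $\Omega\hat S+\hat S'=\sqrt N\hat S-\hat S^3$ with $\Omega:=\hat O^\top\hat O'$ skew-symmetric; symmetry of $\hat S'$ and of $\sqrt N\hat S-\hat S^3$ then forces $\Omega\hat S=-\hat S\Omega$, and since $\hat S\succ 0$ (examine an eigenbasis of $\hat S$) this gives $\Omega=0$, i.e. $\hat O(t)$ is constant. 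Then $\hat S'=\sqrt N\hat S-\hat S^3$ has polynomial right-hand side, so its eigenbasis is frozen and its positive eigenvalues $\sigma_j$ solve $\sigma_j'=\sigma_j(\sqrt N-\sigma_j^2)$, hence $\sigma_j(t)\to N^{1/4}$ and $\hat S(t)\to N^{1/4}I_{C-1}$. Therefore $W(t)\to N^{1/4}U\hat O(0)R^\top=:W_\infty$, with $W_\infty W_\infty^\top=\sqrt N\,UU^\top=\sqrt N(I-\frac1C 1_C1_C^\top)$, while $H(t)\to\frac1{\sqrt N}(W_\infty\otimes 1_N)^\top$ and $b(t)\to\frac1C 1_C$; so $\lim_{t\to\infty}Z(t)$ exhibits strong neural collapse. (Alternatively, convergence of $W(t)$ follows from the {\L}ojasiewicz gradient inequality for the real-analytic potential $g(W)=\frac14\|WW^\top\|_F^2-\frac{\sqrt N}{2}\|W\|_F^2$, whose negative gradient generates the $W$-flow, together with boundedness of the trajectory; I include the polar-decomposition argument because it is self-contained and identifies $W_\infty$ exactly.)
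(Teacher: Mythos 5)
Your proposal is correct, and its mathematical core coincides with the paper's: you compute the gradient restricted to $S$ to obtain the reduced dynamics $W'=\sqrt{N}W-WW^\top W$ and $\beta'=N(1-C\beta)$, derive the Riccati equation $P'=2\sqrt{N}P-2P^2$ for $P=WW^\top$, and solve it by freezing the eigenprojections and integrating the scalar logistic equations. You differ in two places, both in the direction of greater rigor. First, for (a)--(b) the paper only checks that $\nabla R_e$ maps $S$ into $S$ and treats invariance as immediate; you instead construct the candidate trajectory inside $S$, verify global existence via the bound on $\frac{d}{dt}\|W\|_F^2$, and invoke uniqueness for the locally Lipschitz (polynomial) vector field --- this closes a small logical step the paper leaves implicit. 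Second, for the convergence of $W(t)$ itself (needed to pass from $\lim WW^\top$ to strong neural collapse of $\lim Z(t)$), the paper asserts tersely that the singular vectors of $W(t)$ are singular vectors of $W'(t)$ and hence frozen; your polar-decomposition argument ($\hat W=\hat O\hat S$ on the invariant row and column spaces, $\Omega\hat S+\hat S\Omega=0$ with $\hat S\succ0$ forcing $\Omega=0$) proves the same frozen-frame fact in a self-contained way and identifies $W_\infty=N^{1/4}U\hat O R^\top$ explicitly, at the cost of restricting to the rank-$(C-1)$ case --- which is all the ``in particular'' clause requires, since the general statement about $\lim WW^\top$ is already covered by the Riccati analysis. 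The {\L}ojasiewicz alternative you mention would give convergence without identifying the limit and is unnecessary here.
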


\begin{proof}
We start by verifying (a).
Since $S$ is a subspace, it suffices to show that $Z:=(H,W,b)\in S$ implies $\nabla R_e(Z)\in S$.
First, the constraint $1_C^\top W=0$ implies
\[
H1_{CN}
=\frac{1}{\sqrt{N}}(W\otimes 1_N)^\top(1_C\otimes 1_N)
=\frac{1}{\sqrt{N}}(W^\top1_C\otimes 1_N^\top1_N)
=0.
\]
This then implies
\begin{align}
\nabla_b R_e(Z)
\nonumber
&=(WH+b1_{CN}^\top-I_C\otimes 1_N^\top)1_{CN}\\
\label{eq.grad flow b}
&=CNb-(I_C\otimes 1_N^\top)(1_C\otimes 1_N)
=CNb-N1_C
\in\operatorname{span}\{1_C\}.
\end{align}
In addition, we have
\begin{align*}
1_C^\top \nabla_W R_e(Z)
&=1_C^\top (WH+b1_{CN}^\top-I_C\otimes 1_N^\top)H^\top\\
&=-(1_C^\top\otimes 1)(I_C\otimes 1_N^\top)H^\top
=-1_{CN}^\top H^\top
=0.
\end{align*}
It remains to verify that $\nabla_H R_e(Z)=\frac{1}{\sqrt{N}}(\nabla_W R_e(Z)\otimes 1_N)^\top$.
To this end, writing $b=\beta 1_C$ with $\beta\in\mathbb{R}$ gives $W^\top b=\beta W^\top 1_C=0$.
As such, we have
\begin{align*}
\nabla_H R_e(Z)
&=W^\top (WH+b1_{CN}^\top-I_C\otimes 1_N^\top)\\
&=W^\top WH-W^\top(I_C\otimes 1_N^\top)\\
&=W^\top W\frac{1}{\sqrt{N}}(W\otimes 1_N)^\top-W^\top(I_C\otimes 1_N^\top)
=W^\top(\frac{1}{\sqrt{N}}WW^\top-I_C)\otimes 1_N^\top.
\end{align*}
Next,
\begin{align}
\nabla_W R_e(Z)
\nonumber
&=(WH+b1_{CN}^\top-I_C\otimes 1_N^\top)H^\top\\
\nonumber
&=WHH^\top-(I_C\otimes 1_N^\top)H^\top\\
\nonumber
&=\frac{1}{N}W(W\otimes 1_N)^\top(W\otimes 1_N)-\frac{1}{\sqrt{N}}(I_C\otimes 1_N^\top)(W\otimes 1_N)\\
\label{eq.grad flow W}
&=WW^\top W-\sqrt{N}W,
\end{align}
and so $\frac{1}{\sqrt{N}}(\nabla_W R_e(Z)\otimes 1_N)^\top=\nabla_H R_e(Z)$, as desired.

For (b), we see from \eqref{eq.grad flow b} that
\[
b'(t)
=-\nabla_b R_e(Z(t))
=N(1_C-Cb(t)).
\]
Writing $b(t)=\beta(t)1_C$, then $\beta'(t)=N(1-C\beta(t))$, and so $\beta(t)=\frac{1-\alpha e^{-CNt}}{C}$, as claimed.
Also, we see from \eqref{eq.grad flow W} that
\[
W'(t)
=-\nabla_W R_e(Z(t))
=\sqrt{N}W(t)-W(t)W(t)^\top W(t).
\]
The expression for $H(t)$ follows from the fact that $Z(t)\in S$.

For (c), consider $G(t):=W(t)W(t)^\top$.
Then
\begin{align*}
G'(t)
&=W'(t)W(t)^\top+W(t)W'(t)^\top\\
&=(\sqrt{N}W(t)-W(t)W(t)^\top W(t))W(t)^\top+W(t)(\sqrt{N}W(t)-W(t)W(t)^\top W(t))^\top\\
&=2\sqrt{N}W(t)W(t)^\top-2W(t)W(t)^\top W(t)W(t)^\top\\
&=2\sqrt{N}G(t)-2G(t)^2.
\end{align*}
Since $G'(t)$ and $G(t)$ are simultaneously diagonalizable, it follows that $G(\cdot)$ takes the form $G(t)=\sum_i \lambda_i(t)\Pi_i$, where $\{\Pi_i\}$ denote orthogonal projections onto the eigenspaces of $G(0)$.
Furthermore, we have
\[
\lambda_i'(t)
=2\lambda_i(t)(\sqrt{N}-\lambda_i(t)).
\]
Since $G(0)$ is positive semidefinite, we have $\lambda_i(0)\geq0$ for every $i$.
If $\lambda_i(0)=0$, then $\lambda_i(t)=0$ for all $t\geq0$.
If $\lambda_i(0)>0$, then $\lim_{t\to\infty}\lambda_i(t)=\sqrt{N}$.
It follows that
\[
\lim_{t\to\infty}G(t)
=\lim_{t\to\infty}\sum_i \lambda_i(t)\Pi_i
=\sum_{\lambda_i(0)>0}\sqrt{N}\Pi_i
=\sqrt{N}(I-\Pi).
\]
For the last claim, we first show that $\lim_{t\to\infty}Z(t)$ exists.
Since $\lim_{t\to\infty}b(t)$ exists and $H(t)$ is determined by $W(t)$, it suffices to show that $\lim_{t\to\infty}W(t)$ exists.
Observe that the singular vectors of $W(t)$ are also singular vectors of $W'(t)$, and so $W(t)=\sum_j \sigma_j(t)u_jv_j^\top$, where $\sum_j \sigma_j(0)u_jv_j^\top$ denotes any singular value decomposition of $W_0$.
For each $j$, select $i(j)$ such that $\Pi_{i(j)}u_j=u_j$.
Considering the above analysis of $\lambda_{i(j)}(t)=\sigma_j(t)^2$, it follows that $\lim_{t\to\infty}W(t)=N^{1/4}\sum_{\sigma_i(0)>0}u_iv_i^\top$.
Next, note that $\operatorname{ker}(W_0^\top)\supseteq\operatorname{span}\{1_C\}$ since $(H_0,W_0,b_0)\in S$.
Thus, $\operatorname{rank}W_0\geq C-1$ implies $\operatorname{ker}(W_0^\top)=\operatorname{span}\{1_C\}$, and so $\Pi=\frac{1}{C}1_C1_C^\top$.
As such, $\lim_{t\to\infty}G(t)$ satisfies \eqref{eq.snc1}, while \eqref{eq.snc2} and \eqref{eq.snc3} follow from taking limits of $H(t)$ and $b(t)$, respectively.
\end{proof}

At this point, there appears to be an ``unreasonable effectiveness'' of gradient descent in which locally minimizing empirical risk implicitly encourages strong neural collapse.
We conclude with the following result, which explains why this behavior is actually reasonable:

\begin{lemma}
For every $(H,W,b)\in S$, it holds that
\begin{equation}
\label{eq.empirical risk vs snc}
R_e(H,W,b)
=\frac{1}{2}\|WW^\top-\sqrt{N}(I-\frac{1}{C}1_C1_C^\top)\|_F^2+\frac{CN}{2}\|b-\frac{1}{C}1_C\|_2^2.
\end{equation}
In particular, the global minimizers of
\[
\text{minimize}
\qquad
R_e(H,W,b)
\qquad
\text{subject to}
\qquad
(H,W,b)\in S
\]
are precisely the $(H,W,b)\in\mathbb{R}^{p\times CN}\oplus\mathbb{R}^{C\times p}\oplus\mathbb{R}^C$ which exhibit strong neural collapse.
\end{lemma}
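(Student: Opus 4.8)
The plan is to substitute the defining relations of $S$ into $R_e$ and then complete the square in $b$. Since $(H,W,b)\in S$ we may write $b=\beta 1_C$ for some $\beta\in\mathbb{R}$ and $H=\frac{1}{\sqrt N}(W\otimes 1_N)^\top$; exactly the Kronecker computation from the proof of part~(a) of the first lemma (now with general $\beta$ in place of $\tfrac1C$) then gives
\[
WH+b1_{CN}^\top-I_C\otimes 1_N^\top=\Big(\tfrac{1}{\sqrt N}WW^\top+\beta 1_C1_C^\top-I_C\Big)\otimes 1_N^\top .
\]
Using $\|M\otimes 1_N^\top\|_F^2=N\|M\|_F^2$ and pulling the scalar $\tfrac{1}{\sqrt N}$ out of the norm, this yields $R_e(H,W,b)=\tfrac12\big\|WW^\top-\sqrt N I_C+\sqrt N\beta 1_C1_C^\top\big\|_F^2$.

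Next I would set $Q:=WW^\top-\sqrt N\big(I-\tfrac1C1_C1_C^\top\big)$, so that the matrix inside the last norm equals $Q+\sqrt N\big(\beta-\tfrac1C\big)1_C1_C^\top$. Expanding $\tfrac12\|Q+\sqrt N(\beta-\tfrac1C)1_C1_C^\top\|_F^2$ produces three pieces: the term $\tfrac12\|Q\|_F^2$, which is the first summand claimed in~\eqref{eq.empirical risk vs snc}; the quadratic term $\tfrac{N}{2}(\beta-\tfrac1C)^2\|1_C1_C^\top\|_F^2=\tfrac{C^2N}{2}(\beta-\tfrac1C)^2=\tfrac{CN}{2}\|b-\tfrac1C1_C\|_2^2$, which is the second summand; and the cross term $\sqrt N(\beta-\tfrac1C)\langle Q,1_C1_C^\top\rangle_F=\sqrt N(\beta-\tfrac1C)\,1_C^\top Q1_C$. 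The key point is that this cross term vanishes: the $-\sqrt N I_C$ and $+\tfrac{\sqrt N}{C}1_C1_C^\top$ contributions to $1_C^\top Q1_C$ cancel, leaving $1_C^\top Q1_C=\|W^\top 1_C\|_2^2$, and this is $0$ because the membership $(H,W,b)\in S$ forces $1_C^\top W=0$. This establishes the identity~\eqref{eq.empirical risk vs snc}.

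For the ``in particular'' assertion I would argue as follows. By~\eqref{eq.empirical risk vs snc}, on $S$ the risk $R_e$ is a sum of two nonnegative terms, so its minimum over $S$ equals $0$ and is attained precisely when $WW^\top=\sqrt N(I-\tfrac1C1_C1_C^\top)$ and $b=\tfrac1C1_C$; on $S$ the relation $H=\tfrac1{\sqrt N}(W\otimes 1_N)^\top$ holds automatically, so these minimizers are exactly the points satisfying \eqref{eq.snc1}--\eqref{eq.snc3}. Conversely, any $(H,W,b)$ exhibiting strong neural collapse lies in $S$: \eqref{eq.snc1} gives $1_C^\top WW^\top 1_C=\sqrt N(C-C)=0$, hence $1_C^\top W=0$; \eqref{eq.snc3} gives $b\in\operatorname{span}\{1_C\}$; and \eqref{eq.snc2} is the required relation between $H$ and $W$. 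Hence the global minimizers of the constrained problem are precisely the strong neural collapse points.

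I do not expect a substantive obstacle; the computation is routine Kronecker-product bookkeeping together with one completed square. The only delicate step is the vanishing of the cross term $1_C^\top Q1_C=\|W^\top 1_C\|_2^2$, which is exactly where the orthogonality constraint $1_C^\top W=0$ built into $S$ is used: without it, $R_e$ would pick up a term coupling $WW^\top$ to $b$ and would not decompose as in~\eqref{eq.empirical risk vs snc}.
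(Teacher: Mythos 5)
Your proof is correct and takes essentially the same route as the paper: both reduce the residual to an expression whose cross term is killed by $1_C^\top W=0$, the only cosmetic difference being that you first collapse everything to a $C\times C$ matrix tensored with $1_N^\top$ and complete the square there, while the paper splits the $C\times CN$ residual into two orthogonal pieces and invokes the Pythagorean theorem. Your explicit verification that strong neural collapse forces $1_C^\top W=0$ (via \eqref{eq.snc1}) is a point the paper leaves implicit, and is a welcome addition.
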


\begin{proof}
Suppose $(H,W,b)\in S$.
The implied form of $H$ then gives
\begin{align*}
WH+b1_{CN}^\top-I_C\otimes 1_N^\top
&=W(\frac{1}{\sqrt{N}}(W\otimes 1_N)^\top)+b1_{CN}^\top-(I_C-\frac{1}{C}1_C1_C^\top+\frac{1}{C}1_C1_C^\top)\otimes 1_N^\top\\
&=(\frac{1}{\sqrt{N}}WW^\top-(I_C-\frac{1}{C}1_C1_C^\top))\otimes 1_N^\top + (b-\frac{1}{C}1_C)1_{CN}^\top.
\end{align*}
Since $1_C^\top W=0$ and $b\in\operatorname{span}\{1_C\}$, it follows that the two terms on the right-hand side are orthogonal to each other.
The Pythagorean theorem then gives
\begin{align*}
R_e(H,W,b)
&=\frac{1}{2}\|(\frac{1}{\sqrt{N}}WW^\top-(I_C-\frac{1}{C}1_C1_C^\top))\otimes 1_N^\top\|_F^2 + \frac{1}{2}\|(b-\frac{1}{C}1_C)1_{CN}^\top\|_F^2\\
&=\frac{1}{2}\|WW^\top-\sqrt{N}(I-\frac{1}{C}1_C1_C^\top)\|_F^2+\frac{CN}{2}\|b-\frac{1}{C}1_C\|_2^2.
\end{align*}
For the second part of the result, first observe that the constraint $(H,W,b)\in S$ implies \eqref{eq.snc2}.
Subject to this constraint, \eqref{eq.empirical risk vs snc} gives that equality in $R_e(H,W,b)\geq0$ holds precisely when \eqref{eq.snc1} and \eqref{eq.snc3} both hold.
In addition, $(H,W,b)$ exhibits strong neural collapse only if $(H,W,b)\in S$.
The result follows.
\end{proof}

\section{Discussion}

In this paper, we introduced the \textit{unconstrained features model} that captures the neural collapse phenomena observed by Papyan, Han and Donoho~\cite{PapyanHD}, and then we identified an invariant subspace of the gradient descent dynamical system that encourages neural collapse.
While we have explained much about the emergence of neural collapse in the unconstrained features model, there are several directions for future investigation.
First, it would be nice to fully characterize the dynamics of gradient flow in the unconstrained features model.
How does the distance from $S$ behave over the full gradient flow trajectory?
To what extent does strong neural collapse hold for the limit point of this trajectory when the initialization is at most $\epsilon>0$ away from the origin?
What behaviors emerge from other popular local optimization methods such as Adaptive Moment Estimation~\cite{KingmaB:14}?
What alternatives to the unconstrained features model are amenable to theoretical treatment?

\section*{Acknowledgments}
DGM thanks Arje Nachman and Soledad Villar for (independently) bringing the article~\cite{PapyanHD} to his attention.
DGM was partially supported by AFOSR FA9550-18-1-0107 and NSF DMS 1829955. 
HP was partially supported by an AMS-Simons Travel Grant.

\end{document}